\DeclareMathAlphabet{\mathcal}{OMS}{cmsy}{m}{n}
\DeclareMathOperator{\comp}{comp}
\DeclareMathOperator{\sign}{sign}
\DeclareMathOperator{\Ima}{Im}
\newtheorem{theorem}{Theorem}
\newtheorem{corollary}{Corollary}
\newtheorem{definition}{Definition}
\newtheorem{lemma}{Lemma}
\title{Consistency and Finite Sample Behavior of Binary
Class Probability Estimation}
\author{
   Alexander Mey \\
  Delft University of Technology, The Netherlands\\
  \texttt{a.mey@tudelft.nl} \\
   \And
   Marco Loog\\
  Delft University of Technology, The Netherlands\\
University of Copenhagen, Denmark \\
  \texttt{m.loog@tudelft.nl} \\
}
\begin{document}
\maketitle

\begin{abstract}
In this work we investigate to which extent one can recover class probabilities within the empirical risk minimization (ERM) paradigm. The main aim of this paper is to extend existing results and emphasize the tight relations between empirical risk minimization and class probability estimation. Based on existing literature on excess risk bounds and proper scoring rules, we derive a class probability estimator based on empirical risk minimization. We then derive fairly general conditions under which this estimator will converge, in the $L_1$-norm and in probability, to the true class probabilities. Our main contribution is to present a way to derive finite sample $L_1$-convergence rates of this estimator for different surrogate loss functions. We also study in detail which commonly used loss functions are suitable for this estimation problem and finally discuss the setting of model-misspecification.
\end{abstract}

\section{Introduction}

In binary classification problems we try to predict a label $y \in \{-1,1\}=\mathcal{Y}$ based on an input feature vector $x \in \mathcal{X}$. Since optimizing for the classification accuracy is often computationally too complex, one typically measures performance through a surrogate loss function. Such methods are designed to achieve good classification performance, but often we are also interested in the classifier's confidence or a class probability estimate as such. We may, for instance, not only want to classify a tumor as benign or malignant, but also know an estimated probability that the predicted label is wrong. Also various methods in active or semi-supervised learning rely on such class probability estimates. In active learning they are, for instance, used in uncertainty based rules \cite{AL,AL2} while in semi-supervised learning they can be used for performing entropy regularization \cite{entropy}. 

In this paper we derive necessary and sufficient conditions under which classifiers, obtained through the minimization of an empirical loss function, allow us to estimate the class probability in a consistent way. More precisely, we present a \emph{general} way to derive finite sample bounds based on those conditions. While the use of class probability estimates, as argued before, finds a broad audience, the necessary tools to understand the behavior, especially the literature on proper scoring rules, is not that broadly known. So next to our contribution on finite sample behavior for class probability estimation we present a condensed introduction to this, in our opinion, under-appreciated field.

A proper scoring rule is essentially a loss function that can measure the class probability \emph{point-wise}. We investigate in which circumstances those loss functions make use of this potential and lift this point-wise property to the complete space. Next to proper scoring rules we use \emph{excess risk bounds} to come to our results. Excess risk bounds are essentially inequalities that quantify how much an empirical risk minimizer is off from the true risk. Interestingly, our work does not need any specific excess risk bound and is thus very flexible. Any progress in that theory may also translate to this work. Furthermore, if one is willing to make some restrictions on the underlying distributions, in order to achieve better excess risk bounds, those results directly translate to our setting as well. We elaborate on that further in Section \ref{retrieve2}.

Combining those two areas, our main contributions are the following. Based on the existing literature, we define in Section \ref{estimatordefn}, Equation \eqref{estimator}, a probability estimate $\hat{\eta}$ derived from an empirical risk minimizer. Based on this we analyze in Section \ref{lossfunctions} to which extent commonly used loss functions are suitable for the task of class probability estimation. Following this and the analysis thereafter, we argue in Section \ref{svssh} that the squared loss is, in view of this paper, not a particular good choice.  In Section \ref{main} we derive conditions that ensure that the estimator $\hat{\eta}$ converges in probability towards the true posterior. In the same section we present a general way to analyze the finite sample behavior of the convergence rate for different loss functions, in particular for least squares and logistic regression used for classification, and we discuss the behavior of the estimator when it is misspecified. In this case one can in general not recover the true class probabilities, but instead find the best approximation with respect to a Bregman divergence. As a direct application of our theory we derive in Section \ref{rate} an estimation rate for the squared and the logistic loss, where we believe in particular that the rate for the logistic loss is not known in literature. In Section \ref{discussion} we conclude and discuss our analysis. In particular we discuss how one can extend this work to asymmetric loss functions and analyze their convergence behavior per class label. The following two sections start with related work and some preliminaries. 

\section{Related Work} \label{related}
Many results on posterior estimation in the context of non-parametric regression can be found in \citet{posteriorbook}. The main differences from our results to those type of results is threefold. The first difference is conceptual. While the results presented in \citet{posteriorbook} investigate methods that are specifically designed for posterior estimation, we ask the question if it is possible to obtain consistent posterior estimates with classification methods. Second, to obtain meaningful convergence rate guarantees, the results of \citet{posteriorbook} make usually assumptions on the distribution. We shift this burden from the distribution to the hypothesis set used. The difference is, that while we always have meaningful finite sample guarantees, our estimation procedure is not consistent in the case of model misspecification. The methods used by \citet{posteriorbook} are always consistent, but may have arbitrarily slow convergence on some distributions. Third, as we assume that the excess risk bounds we use are true with high probability over drawn samples, our convergence results hold with high probability, while \citet{posteriorbook} makes those statements in expectation over the sampling process.

The starting point of our analysis follows closely the notation and concepts as described by \citet{buja} and \citet{reid1, reid2}. While \citet{buja} and \citet{reid1} deal with the inherent structure of proper scoring rules, \citet{reid2} make connections between the expected loss in prediction problems and divergence measures of two distributions. In contrast to that we investigate under which circumstances proper scoring rules can make use of their full potential in order to estimate class probabilities. Similar to our work \cite{surrogateregretbounds} gather different sources, in addition to the theory of proper scoring rules, and present general results on regret bounds for class probability estimation. Our work strongly differs in techniques used and thus also in the type of result. \cite{surrogateregretbounds} use an integral representation of the Bayes risk and derive point-wise regret bounds on the Bregman divergence (as in Theorem 3). We draw from the literature of learning theory and excess risk bounds and derive high-probability $L_1$ regret bounds.

\citet{convergence} perform an analysis similar to ours as they also investigate convergence properties of a class probability estimator, their start and end point are very different though. While we start with theory from proper scoring rules, their paper directly starts with the class probability estimator as found in \citet{zhang}. The problem is that the estimator in \citet{zhang} only appears as a side remark, and it is unclear to which extent this is the best, only or even the correct choice. This paper contributes to close this gap and answers those questions. 
They show that the estimator converges to a unique class probability model. In relation to this one can view this paper as an investigation of this unique class probability model and we give necessary and sufficient conditions that lead to convergence to the true class probabilities. Note also that their paper uses convex methods, while our work in comparison draws from the theory of proper scoring rules.

\citet{elicitation} look at the problem in a more general fashion. They connect different surrogate loss functions to certain statistics of the class probability distribution, e.g. the mean, while we focus on the estimation of the full
class probability distribution. This allows us to come to more specific results, such as finite sample behavior.

Another general analysis can be found in \cite{steinwart}. He presents a general tool to relate convergence in a surrogate risk to the convergence in a target risk, and also presents finite sample rates. As we focus on class probability estimation we are able to derive more specific results, and in particular our Lemma \ref{Probineq} and Corollary \ref{modulus} tell us when condition (12) of Theorem 2.13 from \cite{steinwart} is true for class probability estimation.

The probability estimator we use also appears in \citet{Agarwal2} where it is used to derive excess risk bounds, referred to as surrogate risk bounds, for bipartite ranking. The methods used are very similar in the sense that these are also based on proper scoring rules. The difference is again the focus, the conditions used and the conclusions made. They introduce the notion of strongly proper scoring rules which directly allows one to bound the $L_2$-norm, and thus the $L_1$-norm, of the estimator in terms of the excess risk. We show that convergence can be achieved already under milder conditions. We then use the concept of modulus of continuity, of which strongly proper scoring rules are a particular case, to analyze the rate of convergence for posterior estimation. \citet{Agarwal2} on the other hand derives risk bounds for the ranking error, which essentially measures the probability that a randomly drawn positive instance gets assigned a lower value (called score in that context) than a randomly drawn negative instance.

\section{Preliminaries}\label{setup}
We work in the classical statistical learning setup for binary classification. We assume that we observe a finite i.i.d. sample $(x_i,y_i)_{1 \leq i \leq n}$ drawn from a distribution $P$ on $\mathcal{X} \times \mathcal{Y}$. Here $\mathcal{X}$ denotes a feature space and $\mathcal{Y}=\{-1,1\}$ denotes a binary response variable. We then decide upon a hypothesis class $\mathcal{F}$ such that every $f \in \mathcal{F}$ is a map $f: \mathcal{X}  \to \mathcal{V}$ for some space $\mathcal{V}$. Given the space $\mathcal{V}$ we call any function $l:\{-1,1\} \times \mathcal{V} \to [0,\infty)$ a \emph{loss function}. The interpretation of the loss function is that we incur the penalty $l(y,v)$ when we predicted a value $v$ while we actually observed the label $y$. Our goal is then to find a predictor $f_n \in \mathcal{F}$ based on the finite sample such that $\mathbb{E}[l(Y,f_n(X)]$ is small, where $X \times Y$ is a random variable distributed according to $P$. In other words, we want to find an estimator $f_n$ that approximates the true risk minimizer $f_0$ well in terms of the expected loss, where 

\begin{equation} \label{truerisk}
 f_0: = \arg \min\limits_{f \in \mathcal{F}} \mathbb{E}[l(Y,f(X))].
\end{equation}

The estimator $f_n$ is often chosen to be the empirical risk minimizer
$$
f_n=\arg \min\limits_{f \in \mathcal{F}} \sum_{i=1}^n l(y_i,f(x_i)).
$$
 As we show in this paper, finding such an $f_n$ implicitly means to find a good estimate for $p(y\mid x):=P(Y=y\mid X=x)$ in many settings. Since we regularly deal with $p(y\mid x)$ and related quantities we introduce the following notation. To start with, we define $\eta(x):=P(Y=1\mid X=x)$. Depending on the context we drop the feature $x$ and think of $\eta \in [0,1]$ as a scalar. Accepting the small risk of overloading the notation we sometimes also think of $\eta$ as a Bernoulli distribution with outcomes in $\mathcal{Y}$ and parameter $\eta$, as in the following definition.
We define the \emph{point-wise conditional risk} as
\begin{equation} \label{condrisk}
L(\eta,v):=\mathbb{E}_{Y \sim \eta}[l(Y,v)]= \eta l(1,v)+(1-\eta) l(-1,v),
\end{equation}
the \emph{optimal point-wise conditional risk} as
\begin{equation} \label{optcondrisk}
L^*(\eta):= \min\limits_{v \in \mathcal{V}} L(\eta,v),
\end{equation}
and we denote by $v^*(\eta)$ the set of values that optimize the point-wise conditional risk
\begin{equation} \label{fstar}
v^*(\eta):= \arg \min\limits_{v \in \mathcal{V}} L(\eta,v).
\end{equation}
Finally we define the \emph{conditional excess risk} as
\begin{equation} \label{condexcessrisk}
\Delta L(\eta,v):= L(\eta,v)-L^*(\eta).
\end{equation}

\subsection{Proper Scoring Rules}

If we chose $\mathcal{V}=[0,1]$, we say that $l:\{-1,1\} \times \mathcal{V} \to \mathbb{R}$ is a \emph{CPE loss}, where CPE stands for class probability estimation. The name stems from the fact that if $\mathcal{V}=[0,1]$ it is already normalized to a value that can be interpreted as a probability. If $l$ is a CPE loss we call it a \emph{proper scoring rule} or \emph{proper loss} if $\eta \in v^*(\eta)$ and we call it a \emph{strictly proper scoring rule} or \emph{strictly proper loss} if $v^*(\eta)=\{ \eta \}$. In other words, $l$ is a proper scoring rule if $\eta$ is \emph{a} minimizer of $L(\eta,\cdot)$ and this is strict if $\eta$ is the only minimizer. In case $l$ is strict we drop the set notation of $v^*$, so that $v^*(\eta)=\eta$.

\subsection{Link Functions}
As we will see later strictly proper CPE losses are well suited for class probability estimation. In general, however, we cannot expect that $\mathcal{V}=[0,1]$, but we may still want to use the corresponding loss function for class probability estimation. To do that we will use the concept of link functions \cite{buja,reid1}. A \emph{link function} is a map $\psi:[0,1] \to \mathcal{V}$, so a function that indeed links the values from $\mathcal{V}$ to something that can be interpreted as a probability. Combining such a link function with a loss $l:\{-1,1\} \times \mathcal{V} \to [0,\infty)$ one can define a CPE loss $l_{\psi}$ as follows.
\begin{align*} 
& l_{\psi}:\{-1,1\} \times [0,1] \to [0,\infty) \\
& l_{\psi}(y,q):=l(y,\psi(q))
\end{align*}
We call the combination of a loss and a link function $(l,\psi)$ a \emph{(strictly) proper composite loss} if $l_{\psi}$ is (strictly) proper as a CPE loss.

To distinguish between the losses $l$ and $l_{\psi}$ we subscript the quantities \eqref{condrisk}-\eqref{condexcessrisk} with a $\psi$ if we talk about $l_{\psi}$ instead of $l$. For example we define $L_{\psi}(\eta,q):=L(\eta,\psi(q))$ for $q \in [0,1]$ and in the same way we define $v^*_{\psi}(\eta)$, $L^*_{\psi}(\eta)$  and $\Delta L_{\psi}(\eta,q)$. Note that if $(l,\psi)$ is a strictly proper composite loss, we know that $v^*_{\psi}(\eta)$ are single element sets, but the same does not need to hold for $v^*(\eta)$.

\subsection{Degenerate Link Functions}
To ask a composite loss $(l,\psi)$ to be proper is not a strong requirement, one can check that choosing $\psi$ as constant function already fulfills this. This is because a composite loss $(l,\psi)$ is proper, iff the true posterior $\eta$ is a minimizer of the conditional risk $L_{\psi}(\eta,\cdot)$, i.e. $\eta \in v^*_{\psi}(\eta)$. If $\psi$ is constant, then so is the conditional risk $L_{\psi}(\eta,\cdot)$ and then every value is a minimizer, so in particular $\eta$ is a minimizer. We want to avoid this degenerate behavior for the task of probability estimation and will ask $\psi$ to cover enough of $\mathcal{V}$ in the following sense. We call a composite loss $(l,\psi)$ \emph{non-degenerate} if for all $\eta \in [0,1]$ we have that $\Ima \psi \cap v^*(\eta) \neq \emptyset$, where $\Ima \psi \subset \mathcal{V} $ is the image of $\psi$ on $[0,1]$. This does not directly exclude constant link functions for example, but consider the following. If $\psi$ is constant and non-degenerate, then there is a single $v=\Ima \psi$ such that $v \in v^*(\eta)$ for all $\eta$. Thus $v$ would always minimize the loss, and we would, irrespectively of the input, always predict $v$. This is of course a property that no reasonable loss function should carry.

\section{Behavior of Proper Composite Losses} \label{estimatordefn}
For our convergence results we will need a loss function to be a strictly proper CPE loss. In this section we investigate how to characterize those loss functions. 

We start by investigating proper CPE loss functions. Our first lemma states that the link functions that turns the loss $l$ into a proper composite loss is already defined by the behavior of $v^*$. As this lemma and Lemma \ref{mini2} are straightforward derivations from the definitions, and of no further interest, we refer for the proofs to the supplementary material.

\begin{lemma} \label{mini1}
Let $l:\{-1,1\} \times \mathcal{V} \to [0,\infty)$ be a loss function and $\psi$ be a link function. The composite loss function $(l,\psi)$ is then proper and non-degenerate if and only if $\psi \in v^*$, meaning that $\psi(\eta) \in v^*(\eta)$ for all $\eta \in [0,1]$. 
\end{lemma}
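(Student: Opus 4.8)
The plan is to simply unfold the definitions of \emph{proper composite loss}, \emph{non-degenerate}, and the $\psi$-subscripted quantities, and observe that the whole statement reduces to comparing $L(\eta,\psi(q))$ with $L^*(\eta)$. The single observation driving everything is that, because $\psi$ maps into $\mathcal{V}$, we have $L(\eta,\psi(q)) \geq L^*(\eta)$ for every $q \in [0,1]$, with equality exactly when $\psi(q) \in v^*(\eta)$; consequently $L^*_\psi(\eta) = \min_{q \in [0,1]} L(\eta,\psi(q)) \geq L^*(\eta)$ for all $\eta$. I would state this as a preliminary remark and then treat the two implications.

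For the ``if'' direction I would assume $\psi(\eta) \in v^*(\eta)$ for every $\eta \in [0,1]$. Non-degeneracy is immediate: $\psi(\eta) \in \Ima\psi$ and $\psi(\eta) \in v^*(\eta)$, so $\Ima\psi \cap v^*(\eta) \neq \emptyset$. Properness follows from the preliminary remark: $L_\psi(\eta,\eta) = L(\eta,\psi(\eta)) = L^*(\eta)$ is the smallest value $L_\psi(\eta,\cdot)$ can take, so $\eta$ minimizes $L_\psi(\eta,\cdot)$, i.e. $\eta \in v^*_\psi(\eta)$.

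For the ``only if'' direction I would fix $\eta$, use non-degeneracy to choose $q_0 \in [0,1]$ with $\psi(q_0) \in v^*(\eta)$, i.e. $L(\eta,\psi(q_0)) = L^*(\eta)$; combined with the preliminary inequality this gives $L^*_\psi(\eta) = L^*(\eta)$. Properness then says $\eta \in v^*_\psi(\eta)$, hence $L(\eta,\psi(\eta)) = L^*_\psi(\eta) = L^*(\eta)$, which is exactly $\psi(\eta) \in v^*(\eta)$.

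There is no genuine obstacle here; the only point needing a moment's care is why both hypotheses are required in the forward direction. Properness alone only certifies that $\eta$ is optimal \emph{among the values} $\{\psi(q) : q \in [0,1]\}$, all of which could a priori be strictly suboptimal for the original loss $l$ (this is precisely the degenerate-link phenomenon discussed before the lemma); it is non-degeneracy that forces $L^*_\psi(\eta) = L^*(\eta)$ and thereby upgrades ``optimal within the image of $\psi$'' to ``optimal in $\mathcal{V}$''. I would therefore present non-degeneracy as the hypothesis doing the real work and the remainder as bookkeeping with the definitions, and refer to the supplement for the fully written-out version as the authors indicate.
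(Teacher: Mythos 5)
Your proposal is correct and follows essentially the same route as the paper's own proof: the backward direction is identical, and your forward direction merely rephrases the paper's contradiction argument (that $L(\eta,\psi(\eta_1))<L(\eta,\psi(\eta))$ would contradict properness) as a direct chain of equalities $L_\psi(\eta,\eta)=L^*_\psi(\eta)=L^*(\eta)$ obtained from non-degeneracy. Your closing remark correctly isolates non-degeneracy as the hypothesis that upgrades optimality within $\Ima\psi$ to optimality in $\mathcal{V}$, which is exactly the role it plays in the paper.
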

\begin{proof}

First we show that if $(l,\psi)$ is proper and non-degenerate, then $\psi \in v^*$.
Let $(l,\psi)$ be a proper composite loss, so $\eta \in v^*_{\psi}(\eta)$, i.e. $\eta$ minimizes $L(\eta,\psi(\cdot))$. As $(l,\psi)$ is non-degenerate there exists at least one $\eta_1$ such that $\psi(\eta_1) \in v^*(\eta)$. If $\psi(\eta) \not\in v^*(\eta)$ we would find that $\eta$ can not be a minimizer of $L(\eta,\psi(\cdot))$ as then $L(\eta,\psi(\eta_1))<L(\eta,\psi(\eta))$. \\

Now we show that $(l,\psi)$ is a proper non-degenerate composite loss if $\psi \in v^*$. By definition, $(l,{\psi})$ is proper if $\eta \in v^*_{\psi}(\eta)$. This is the case if and only if $L(\eta,\psi(\eta))= \min\limits_{q \in [0,1]} L(\eta,\psi(q)) $. But this is the case if $\psi \in v^*$ since $v^*(\eta)$ is defined as the set of minimizers of $L(\eta,\cdot)$. The non-degenerate follows directly by definition. 
\end{proof}
This lemma gives thus necessary and sufficient condition on our link $\psi$ to lead to a proper loss function. The result is very similar to Corollary 12 and 14 found in \cite{reid1}. Their corollaries state necessary and sufficient conditions on the link function, using the assumption that the loss has differentiable partial losses, which is an assumption we don't require. 
 
In Section \ref{retrieve2} we show that \emph{strictly} proper losses, together with some additional assumptions, lead to consistent class probability estimates. So it is useful to know how to characterize those functions. The following lemma shows that a link function that turns a loss into strictly proper and non-degenerate CPE loss can be characterized again by the behavior of $v^*$.

 \begin{lemma} \label{mini2}
Let $l:\{-1,1\} \times \mathcal{V} \to [0,\infty)$ be a loss function and $\psi$ a link function. A composite loss function $(l,\psi)$ is then strictly proper and non-degenerate if and only if $\psi \in v^*$ and $v^*({\eta_1}) \cap v^*({\eta_2}) \cap \Ima \psi=\emptyset$ for all pairwise different $\eta_1,\eta_2 \in [0,1]$.
 \end{lemma}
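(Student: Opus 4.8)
The plan is to reduce the claim to Lemma~\ref{mini1}. Since a strictly proper composite loss is in particular proper, that lemma already shows the ``proper and non-degenerate'' content is exactly the condition $\psi \in v^*$; so what remains is to show that, \emph{given} $\psi \in v^*$, strictness of $(l,\psi)$ is equivalent to the disjointness condition $v^*(\eta_1) \cap v^*(\eta_2) \cap \Ima\psi = \emptyset$ for all pairwise different $\eta_1,\eta_2$. As a preliminary step I would record that whenever $\psi \in v^*$ we have $L^*_\psi(\eta) = L^*(\eta)$ for every $\eta$: indeed $\psi(\eta) \in v^*(\eta)$ gives $L(\eta,\psi(\eta)) = L^*(\eta)$, while trivially $L^*(\eta) \le \min_{q\in[0,1]} L(\eta,\psi(q)) \le L(\eta,\psi(\eta))$. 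Consequently, for $\psi \in v^*$, ``$q \in v^*_\psi(\eta)$'' is equivalent to ``$\psi(q) \in v^*(\eta)$'', and this translation is what makes the rest go through cleanly.

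For the forward direction I would assume $(l,\psi)$ is strictly proper and non-degenerate. Lemma~\ref{mini1} gives $\psi \in v^*$. Suppose for contradiction some $v$ lies in $v^*(\eta_1) \cap v^*(\eta_2) \cap \Ima\psi$ with $\eta_1 \neq \eta_2$, and pick $q \in [0,1]$ with $\psi(q) = v$. Then $\psi(q) \in v^*(\eta_1)$, so by the preliminary observation $q \in v^*_\psi(\eta_1)$; strictness says $v^*_\psi(\eta_1) = \{\eta_1\}$, hence $q = \eta_1$. The identical argument with $\eta_2$ forces $q = \eta_2$, contradicting $\eta_1 \neq \eta_2$.

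For the reverse direction I would assume $\psi \in v^*$ together with the disjointness condition. Lemma~\ref{mini1} already yields that $(l,\psi)$ is proper and non-degenerate, so it only remains to verify $v^*_\psi(\eta) = \{\eta\}$ for every $\eta$. Properness gives $\eta \in v^*_\psi(\eta)$. Conversely, if $q \in v^*_\psi(\eta)$, the preliminary observation gives $\psi(q) \in v^*(\eta)$, while $\psi \in v^*$ also gives $\psi(q) \in v^*(q)$, and certainly $\psi(q) \in \Ima\psi$. Thus $\psi(q) \in v^*(\eta) \cap v^*(q) \cap \Ima\psi$; were $q \neq \eta$ this would violate the disjointness hypothesis, so $q = \eta$, completing the proof.

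I do not expect a real obstacle here; the only things demanding care are the set-valued nature of $v^*$ (so one must argue via membership of minimizers rather than equality of $\arg\min$ sets) and the identity $L^*_\psi = L^*$ under $\psi \in v^*$, which underpins the equivalence between ``$q$ minimizes $L_\psi(\eta,\cdot)$'' and ``$\psi(q) \in v^*(\eta)$''. Everything else is a direct unwinding of the definitions, structurally parallel to the proof of Lemma~\ref{mini1}, which is why the authors relegate it to the supplement.
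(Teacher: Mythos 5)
Your proposal is correct and follows essentially the same route as the paper's own proof: both directions reduce to Lemma~\ref{mini1} plus the translation between $q \in v^*_{\psi}(\eta)$ and $\psi(q) \in v^*(\eta)$, with a contradiction via strictness (forward) and via the disjointness on $\Ima\psi$ (reverse). Your explicit preliminary observation $L^*_{\psi}=L^*$ under $\psi \in v^*$ just makes precise a step the paper uses implicitly, and your forward direction is in fact slightly cleaner since it avoids the paper's unnecessary assumption that the preimage point $\eta_3$ be distinct from $\eta_1,\eta_2$.
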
 
 \begin{proof}
 
 By definition the composite loss is strictly proper if and only if 
 $
 \eta=v^*_{\psi}(\eta). 
 $ 
 First we show that $(l,\psi)$ is strictly proper and non-degenerate if $\psi \in v^*$ and $v^*({\eta_1}) \cap v^*({\eta_2})=\emptyset$ for all $\eta_1,\eta_2 \in [0,1]$. From Lemma \ref{mini1} we know already that $\eta \in v^*_{\psi}(\eta)$, we only have to show that $\eta$ is the only element in the set. For that assume that it is not the only element, so that there is a $\gamma \in [0,1]$ such that $\gamma \in v^*_{\psi}(\eta)$. As in the proof of Lemma \ref{mini1} one can conclude that $\psi(\gamma) \in v^*(\eta)$. But we also know, again from Lemma \ref{mini1}, that $\psi(\gamma) \in v^*(\gamma)$. That means that $\psi(\gamma) \in v^*(\eta) \cap v^*(\gamma) \cap \Ima \psi \neq \emptyset$, which is a contradiction to our assumption. \\

 Now we show that $\psi \in v^*$ and $v^*({\eta_1}) \cap v^*({\eta_2}) \cap \Ima \psi=\emptyset$ for all $\eta_1,\eta_2 \in [0,1]$ if $(l,\psi)$ is strictly proper and non-degenerate. The relation $\psi \in v^*$ follows again from Lemma \ref{mini1}. We prove the second claim by contradiction and assume that there exist $\eta_1, \eta_2, \eta_3 \in [0,1]$, all pairwise different, such that $\psi(\eta_3) \in v^*({\eta_1}) \cap v^*({\eta_2})$. With this choice and using that $\psi$ is strictly proper it follows that $\eta_3 = v^*_{\psi}(\eta_1)$ and $\eta_3 = v^*_{\psi}(\eta_2)$. That means that $\eta_1=\eta_3=\eta_2$ which is a contradiction. 
 \end{proof}
So if $(l,\psi)$ is a strictly proper composite loss it will fulfill some sort of injectivity condition on the sets $v^*(\eta)$. With this we will be able to define an inverse $\psi^{-1}$ on those sets, and this will be essentially our class probability estimator. With Lemma \ref{mini2} we can connect every $v \in \mathcal{V}$ to a unique $\eta_v$ by the unique relation $v \in v^*(\eta_v)$ if we assume that $v^*$ \emph{disjointly covers} $\mathcal{V}$ in the sense that 
  \begin{align} \label{cover}
 & \bigcup\limits_{\eta \in [0,1]} v^*(\eta)=\mathcal{V} \ \ \ \text{and} \\ 
 & v^*(\eta_1) \cap v^*(\eta_2)=\emptyset \ \ \ \forall \ \eta_1, \eta_2 \in [0,1], \ \ \eta_1 \neq \eta_2. \label{disjoint}
  \end{align}
Note that we know from Lemma \ref{mini2} that for strict properness it is sufficient for $(l,\psi)$ that the disjoint property \eqref{disjoint} only holds on $\Ima \psi$, the image of $\psi$. This is merely a technicality and we will assume from now on that every strictly proper composite loss will satisfy \eqref{disjoint}. The covering property \eqref{cover} on the other hand can be violated. This happens for example if we use the squared loss together with $\mathcal{V}=\mathbb{R}$. For the squared loss $v^*(\eta)=2\eta-1$, so it only covers the space $[-1,1]$.

If we assume, however, that the regularity properties \eqref{cover} and \eqref{disjoint} hold for a strictly proper non-degenerate composite loss $(l,\psi)$ we can extend the domain of $\psi^{-1}$ from $\Ima \psi$ to the whole of $\mathcal{V}$, see also Figure \ref{psimap}. 
 \begin{figure} 
 \begin{center}
  \includegraphics[width=0.7\textwidth]{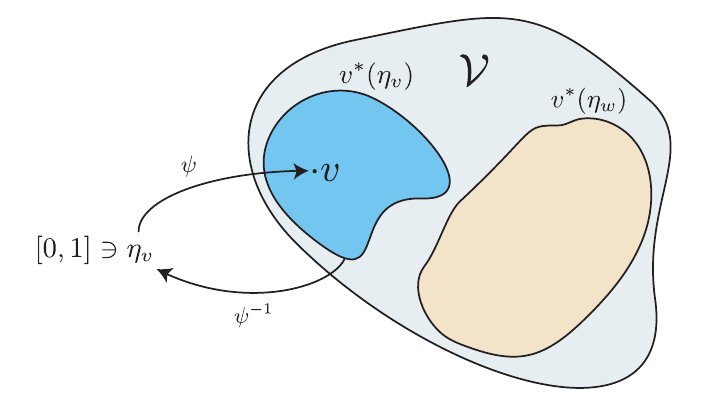}
   \end{center}
  \caption{The way we generally think of the mapping $\psi$, $\psi^{-1}$ and the sets $v^{*}$ if $(l,\psi)$ is non-degenerate and strictly proper. In those cases we can extend $\psi^{-1}$ to the sets $v^*$. This is well defined as the sets $v^*(\eta_v)$ and $v^*(\eta_w)$ have empty intersection for different $\eta_v, \eta_w \in [0,1]$. Note that Lemma \ref{mini2} guarantees that $\psi(\eta_v) \in v^*(\eta_v)$.}
  \label{psimap}
 \end{figure}

\begin{definition} \label{extendinv}
Let $(l,\psi)$ be a strictly proper, non-degenerate composite loss and assume that $v^*$ disjointly covers $\mathcal{V}$. We define, by abuse of notation, the inverse link function $\psi^{-1}: \mathcal{V} \to [0,1]$ by
$\psi^{-1}(v)=\eta_v$, where $\eta_v$ is the unique element in $[0,1]$ such that $v \in v^*(\eta_v)$.
\end{definition}

The requirements from the previous definition is what we consider the archetype of a composite loss that is suitable for probability estimation, although not all of the requirements are necessary. This motivates the following definition.

\begin{definition}

We call a composite loss $(l,\psi)$ a natural CPE loss if $\psi$ is non-degenerate, $v^*$ fulfills the disjoint cover property \eqref{cover} and \eqref{disjoint} and $(l,\psi)$ is strictly proper. 
\end{definition}
We now have all the necessary work done to make the following observation.
\begin{corollary} \label{cpeloss}
If $(l,\psi)$ is a natural CPE loss, then $\psi^{-1}={v^*}^{-1}$.
\end{corollary}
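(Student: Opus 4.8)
The plan is to unwind Definition~\ref{extendinv} together with the disjoint cover property; the statement is essentially definitional, so the proof will be short. First I would pin down what ${v^*}^{-1}$ denotes. Since $(l,\psi)$ is a natural CPE loss, $v^*$ satisfies \eqref{cover} and \eqref{disjoint}, so the family $\{v^*(\eta)\}_{\eta\in[0,1]}$ partitions $\mathcal{V}$: every $v\in\mathcal{V}$ lies in $v^*(\eta)$ for some $\eta$ by \eqref{cover}, and for only one such $\eta$ by \eqref{disjoint}. Hence the set-valued map $v^*\colon[0,1]\to 2^{\mathcal{V}}$ has a genuine single-valued inverse ${v^*}^{-1}\colon\mathcal{V}\to[0,1]$, where ${v^*}^{-1}(v)$ is the unique $\eta$ satisfying $v\in v^*(\eta)$.

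Next I would compare this with $\psi^{-1}$. Because $(l,\psi)$ is a natural CPE loss, Definition~\ref{extendinv} applies and gives $\psi^{-1}(v)=\eta_v$, where $\eta_v$ is the unique element of $[0,1]$ with $v\in v^*(\eta_v)$. This is word for word the description of ${v^*}^{-1}(v)$ obtained in the previous step, so $\psi^{-1}(v)={v^*}^{-1}(v)$ for all $v\in\mathcal{V}$, i.e.\ $\psi^{-1}={v^*}^{-1}$.

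Finally, to see that the ``abuse of notation'' in Definition~\ref{extendinv} is justified — that $\psi^{-1}$ genuinely extends the ordinary inverse of $\psi$ on $\Ima\psi$ — I would note that Lemma~\ref{mini2} gives $\psi(\eta)\in v^*(\eta)$ for every $\eta$, so ${v^*}^{-1}(\psi(\eta))=\eta$ by the uniqueness above, and that $\psi$ is injective: if $\psi(\eta_1)=\psi(\eta_2)$ then this common value lies in $v^*(\eta_1)\cap v^*(\eta_2)$, forcing $\eta_1=\eta_2$ by \eqref{disjoint}. There is no real obstacle here; the only point that needs care is verifying that \eqref{cover} and \eqref{disjoint} are exactly what makes ${v^*}^{-1}$ a well-defined function on all of $\mathcal{V}$, after which the identity $\psi^{-1}={v^*}^{-1}$ is immediate.
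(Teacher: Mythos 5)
Your argument is correct and is exactly the route the paper intends: the corollary is immediate from Definition~\ref{extendinv} once one notes that the disjoint cover property \eqref{cover}--\eqref{disjoint} makes ${v^*}^{-1}$ a well-defined function on all of $\mathcal{V}$, coinciding pointwise with the extended $\psi^{-1}$. Your extra check, via Lemma~\ref{mini2}, that this genuinely extends the ordinary inverse of $\psi$ on $\Ima\psi$ is a sensible (if optional) addition.
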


 The corollary tells us that we can optimize our loss function over $\mathcal{V}$ to get $v^*(\eta)$ and then map this back with the inverse link $\psi^{-1}$ to restore the class probability $\eta$. For this we once more refer to Figure \ref{psimap}. Remember that the set $v^*(\eta_v)$ is the set of all $v \in \mathcal{V}$ that minimize the loss if the true posterior probability was $\eta_v$. If we use a natural CPE loss $(l,\psi)$ we know then that $\psi^{-1}$ maps all those points back to $\eta_v$.

 Given a predictor $f: \mathcal{X} \to \mathcal{V}$ this motivates to define an estimator of $\eta(x)$ as 
\begin{equation} \label{estimator}
 \hat{\eta}=\hat{\eta}(x)=\psi^{-1}(f(x)).
\end{equation}

In Section \ref{main} we give conditions under which $\hat{\eta}(x)$ converges in probability towards $\eta(x)$ when using an empirical risk minimizer $f_n$ as a prediction rule. More formally; Given any $\epsilon>0$ we show that under certain conditions $\hat{\eta}_n(x):=\psi^{-1}(f_n(x))$ satisfies
 \begin{equation} \label{convergence}
 P(|\hat{\eta}_n(X)-\eta(X)|> \epsilon) \xrightarrow{n \to \infty} 0,
\end{equation}
where the probability is measured with respect to $P$. In the next section, however, we want to investigate first $v^*$ and ${v^*}^{-1}$ for some commonly used loss functions.
\section{Analysis of Loss Functions} \label{lossfunctions}
Throughout this paper we consider the following loss functions: Squared loss (Sq), logistic loss (log), squared hinge loss (SqH), Hinge loss and the $0$-$1$ loss, see the first two columns of Table \ref{lossfunc} for specifications. Table \ref{linkfunc} shows the link function that turns the loss functions into a strictly proper composite loss, if possible. Note that this can be decided with the help of Lemma \ref{mini2} and the functions $v^*(\eta)$ which are also shown in Table \ref{linkfunc}. We note that the behavior of the squared and squared hinge loss seems to be very similar. In Section \ref{svssh}, however, we point out an important difference.
\renewcommand{\arraystretch}{1.5}
\begin{table} [ht!]
\centering
\caption{ The loss functions we consider in this paper. The function ${v^*}^{-1}(v)$ is the function that transforms a real output to a posterior estimate. }
\vspace{0.1in}
\label{lossfunc}
\begin{tabular}{lll}
\toprule
    Loss   & $l(v,y)$  & ${v^*}^{-1}(v)$    \\
    \midrule
Sq     & $(1-yv)^2$ & $\frac{v+1}{2}$   \\
Log     &$\ln (1+e^{-vy})$ & $\frac{1}{1+e^{-v}}$  \\
& \\
SqH & $\max (0,1-vy)^2$ & $T(\frac{v+1}{2})$    \\
& \\
Hinge      & $\max (0,1-vy)$ & $ 
\begin{cases}
 \frac{1}{2} & v \in (-1,1) \\
         (0,\frac{1}{2}) & v=-1 \\
   (\frac{1}{2},1) & v=1\\
    1, & v > 1 \\
    0,      & v <  -1 
  \end{cases}$   \\
& \\
0-1       & $I_{ \{\sign(vy) \neq 1\}}$ & $
\begin{cases}
   [\frac{1}{2},1] & \text{if } v \in (0,\infty) \\
     [0, \frac{1}{2}], &  v \in (-\infty,0) \\
      \frac{1}{2}, &  v = 0 \\
\end{cases}$   \\
\bottomrule
\end{tabular}
\end{table}
As already noted by \cite{buja}, also Table \ref{linkfunc} shows that the hinge loss is not suitable for class probability estimation. We observe that the intersections of $v^*(\eta)$ for different $\eta \in [0,1]$ are not disjoint. By Lemma \ref{mini2} we can conclude that there is no link $\psi$ such that $(l,\psi)$ is strictly proper. One way to fix this, proposed by \cite{duin1998classifier} and similar by \cite{Platt}, is to fit a logistic regressor on top of the support vector machine. \cite{bartlett2} investigate the behavior of the hinge loss deeper by connecting the class probability estimation task to the sparseness of the predictor. The hinge loss is of course classification calibrated (essentially meaning that we find point-wise the correct label with it), so between our considered surrogate losses it is the only one that really directly solves the classification problem without implicitly estimating the class probability. 
\begin{table} [h!]
  \caption{The different loss functions we consider in this paper together with their link functions that turn them into CPE losses (if possible).}
  \vspace{0.1in}
  \label{linkfunc}
  \centering
  \begin{tabular}{llll}
    \toprule
    Loss      & $\psi(\eta)$  & $v^*(\eta)$   \\
    \midrule
Sq   &  $2 \eta -1$&$2 \eta -1$   \\ 
Log     &$\ln \frac{\eta}{1-\eta}$& $\ln \frac{\eta}{1-\eta}$  \\ 
& \\
SqH  & $2 \eta -1$& $ 
\begin{cases}
   2\eta-1, & \eta \in (0,1) \\
    [1,\infty), & \eta=1  \\
    (-\infty,-1],      & \eta=-1
\end{cases}
$ \\ 
& \\
Hinge          &  - &$ 
\begin{cases}
   \sign (2\eta-1), & \eta \in (0,1)\setminus \frac{1}{2} \\ 
   [-1,1]  & \eta=\frac{1}{2} \\
    [1,\infty), & \eta=1  \\
    (-\infty,-1],      & \eta=-1
\end{cases}$ \\  
  & \\
0-1 & -&$\begin{cases} 
(0,\infty), & \eta \in (\frac{1}{2},1] \\
(-\infty,0), & \eta \in [0,\frac{1}{2}) \\
\mathbb{R}, & \eta = \frac{1}{2} 
\end{cases} $ \\
    \bottomrule
  \end{tabular}
\end{table}
\section{Convergence of the Estimator} \label{main}
We now prove that the estimator $\hat{\eta}(x)$ as defined in Equation \eqref{estimator} converges in probability and in the $L_1$-norm to the true class probability $\eta$ whenever we use an empirical risk minimizer, for which we have excess risk bounds, as a prediction rule.

\subsection{Using the True Risk Minimizer for Estimation} \label{retrieve}
Before we can investigate under which conditions an empirical risk minimizer can (asymptotically) retrieve $\eta(x)$ we need to investigate under which conditions the true risk minimizer can retrieve it. In this subsection we formulate a theorem that gives necessary and sufficient conditions for that. Not surprisingly we basically require that our hypothesis class is rich enough so as to contain the class probability distribution already. \cite{bartlett1} and similar works often avoid problems caused by restricted classes by assuming from the beginning that the hypothesis class consists of all measurable functions. This theorem relaxes this assumption for the purpose of class probability estimation. 

In this setting we assume that we use a hypothesis class $\mathcal{F}$ where $f \in \mathcal{F}$ are functions $f: \mathcal{X} \to \mathcal{V}$. If we want to do class probability estimation we rescale those functions by composing them with the inverse link $\psi^{-1}: \mathcal{V} \to [0,1]$ so that we effectively use the hypothesis class $\psi^{-1}(\mathcal{F}):=\{ \psi^{-1} \circ f \mid  f \in \mathcal{F} \}$. We then get the following theorem about the possibility of retrieving the posterior with risk minimization.

\begin{theorem}  \label{First}
Assume that $(l,\psi)$ is a natural CPE loss function.  Let $$f_0 = \arg \min\limits_{f \in \mathcal{F}} \mathbb{E} [l(Y,f(X)].$$ 
Then $\psi^{-1}(f_0(x))=\eta(x)$ almost surely if and only if
$\eta \in \psi^{-1}(\mathcal{F}).
$
\end{theorem}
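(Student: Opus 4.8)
The plan is to reduce the statement to a pointwise argument on the conditional risk, exploiting the decomposition of the expected loss into an integral of pointwise conditional risks. First I would recall that for any measurable $f:\mathcal{X}\to\mathcal{V}$,
\[
\mathbb{E}[l(Y,f(X))] \;=\; \mathbb{E}_X\!\left[L(\eta(X),f(X))\right] \;\ge\; \mathbb{E}_X\!\left[L^*(\eta(X))\right],
\]
by the definitions \eqref{condrisk} and \eqref{optcondrisk}, the inequality holding because $L(\eta,v)\ge L^*(\eta)$ for every $v\in\mathcal{V}$. Since the integrand on the left minus the one on the right equals $\Delta L(\eta(X),f(X))\ge 0$, equality of the two expectations forces $\Delta L(\eta(X),f(X))=0$ almost surely, i.e.\ $f(X)\in v^*(\eta(X))$ almost surely. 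The second ingredient is Corollary~\ref{cpeloss}, which for a natural CPE loss gives $\psi^{-1}={v^*}^{-1}$; combined with the disjoint cover property \eqref{cover}--\eqref{disjoint} this means precisely that for every $v\in\mathcal{V}$ one has $\psi^{-1}(v)=\eta$ if and only if $v\in v^*(\eta)$.

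For the ``if'' direction, suppose $\eta\in\psi^{-1}(\mathcal{F})$, so there is $g\in\mathcal{F}$ with $\psi^{-1}(g(X))=\eta(X)$ almost surely. By the equivalence above, $g(X)\in v^*(\eta(X))$ almost surely, hence $\mathbb{E}[l(Y,g(X))]=\mathbb{E}_X[L^*(\eta(X))]$, which by the displayed inequality is the smallest value the expected loss can attain over \emph{any} predictor. As $f_0$ minimizes the expected loss over $\mathcal{F}\ni g$, it attains this same value, so $\mathbb{E}_X[\Delta L(\eta(X),f_0(X))]=0$; the equality case above then gives $f_0(X)\in v^*(\eta(X))$ almost surely, and therefore $\psi^{-1}(f_0(X))=\eta(X)$ almost surely. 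For the ``only if'' direction nothing is needed beyond unwinding definitions: if $\psi^{-1}(f_0(X))=\eta(X)$ almost surely, then, since $f_0\in\mathcal{F}$, the function $\psi^{-1}\circ f_0$ lies in $\psi^{-1}(\mathcal{F})$ and agrees with $\eta$ almost surely, so $\eta\in\psi^{-1}(\mathcal{F})$.

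The one real subtlety is the ``equality case'' step: I am using that a nonnegative integrable random variable with zero expectation vanishes almost surely, applied to $\Delta L(\eta(X),f_0(X))$. This requires the relevant expected losses to be finite, which is implicit in $f_0$ being defined as an argmin of $\mathbb{E}[l(Y,f(X))]$, and it requires $x\mapsto L^*(\eta(x))$ to be measurable, a routine measurable-selection point that can be handled by writing $L^*(\eta(x))=\inf_{q\in\mathbb{Q}\cap[0,1]}L(\eta(x),\psi(q))$ under mild regularity of $l$ and $\psi$ (using non-degeneracy to see $L^*=L^*_\psi$). Everything else is bookkeeping with the definitions of proper composite losses and the identification $\psi^{-1}={v^*}^{-1}$.
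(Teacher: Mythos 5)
Your proof is correct and follows essentially the same route the paper intends: decompose the risk as $\mathbb{E}_X[L(\eta(X),f(X))]$, use the equality case of $\Delta L\ge 0$ to get $f_0(X)\in v^*(\eta(X))$ almost surely, and invoke the identification $\psi^{-1}={v^*}^{-1}$ (Corollary \ref{cpeloss} and the disjoint cover property) to conclude, with the converse direction being definitional. The measurability and a.s.-versus-everywhere points you flag are genuine but routine technicalities, handled exactly in the spirit the paper assumes.
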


 Following Theorem \ref{First} we need to assume that our hypothesis class is flexible enough for consistent class probability estimation. We formulate this assumption as follows.
\paragraph{Assumption A} Given a natural CPE loss $(l,\psi)$ we assume that $\eta \in \psi^{-1}(\mathcal{F})=\{ \psi^{-1} \circ f \mid  f \in \mathcal{F} \}.$
In Subsection \ref{Misspec} we will deal with the case of misspecification, i.e. when $\eta \notin \psi^{-1}(\mathcal{F})$.

\subsection{Using the Empirical Risk Minimizer for Estimation} \label{retrieve2}

In the previous section we considered the possibility of retrieving class probability estimates with the true risk minimizer. To move on to empirical risk minimizers we need the notion of excess risk bounds.
\begin{definition}
Let $f_n: \mathcal{X} \to \mathbb{R}$ be any estimator of $f_0 \in \mathcal{F}$, which may depend on a sample of size $n$. We call 
\begin{equation*} 
B^\mathcal{F}(n,\gamma): \mathbb{N} \to [0,\infty)
\end{equation*} an excess risk bound for $f_n$ if for all $\gamma>0$ we have $B^{\mathcal{F}}(n,\gamma) \rightarrow 0$ for $n \rightarrow \infty$ and with probability of at least $1-\gamma$ over the $n$-sample we have
\begin{align*}
& \mathbb{E}_{X} [\Delta L(\eta(X),f_n(X))] \\
=&\mathbb{E}_{X,Y}[l(Y,f_n(X))-l(Y,f_0)] \leq B^{\mathcal{F}}(n,\gamma).
\end{align*}

\end{definition}
Excess risk bounds are typically in the order of $\left(\frac{\comp(\mathcal{F})}{n}\right)^{\beta}$, where $\beta \in [0.5,1]$ and $\comp(\mathcal{F})$ is a notion of model complexity. Common measures for the model complexity are the VC dimension \cite{Vapnik1998}, Rademacher complexity \cite{bartlett3} or $\epsilon$-cover \cite{Benedek}. The existence of excess risk bounds is tied to the finiteness of any of those complexity notions. A lot of efforts in this line of research are made to find relations between the exponent $\beta$ and the statistical learning problem given by $\mathcal{F}$, the loss $l$ and the underlying distribution $P$. Conditions that ensure $\beta>\frac{1}{2}$ are often called easiness conditions, such as the Tsybakov condition \cite{tsybakov} or the Bernstein condition \cite{audibert}. Intuitively those conditions often state that the variance of our estimator gets smaller the closer we are to the optimal solution. For a in-depth discussion and some recent results we refer to the work of \cite{grunwald}.

Excess risk bounds allow us to bound $\Delta L(\eta(x),f_n(x))$ for a loss $l$, so in particular we can bound $\Delta L_{\psi}(\eta(x), \hat{\eta}(x))$ for a composite loss $(l,\psi)$. We will show $L_1$-convergence by connecting the behavior of $\Delta L_{\psi}(\eta(x), \hat{\eta}(x))$ to $|\eta(x)-\hat{\eta}(x)|$. The following lemma introduces a condition that allows us to draw this connection. 

\begin{lemma} \label{Probineq}

Let $(l,\psi)$ be a natural CPE loss. Assume that for all $\eta \in [0,1]$ the maps
$$L^{0}_{\psi}(\eta,\cdot):=L_{\psi}(\eta,\cdot)\restriction_{[0,\eta]}:[0,\eta] \to \mathbb{R}$$ and 
$$L^{1}_{\psi}(\eta,\cdot):=L_{\psi}(\eta,\cdot)\restriction_{[{\eta},1]}:[{\eta},1] \to \mathbb{R}$$ 
 are strictly monotonic, where $L_{\psi}(\eta,\cdot)\restriction_I$ refers to the restriction of the mapping $L_{\psi}(\eta,\cdot)$ to an interval $I$. This is the case iff $L_{\psi}(\eta,\cdot)$ is strictly convex with $\eta$ as its minimizer. Then there exists for all $\epsilon>0$ a $\delta=\delta(\epsilon)>0$ such that for all $\eta,\hat{\eta} \in [0,1]$ 
\begin{equation} \label{ineq}
|\Delta L_{\psi}(\eta,\hat{\eta})| < \delta \Rightarrow  |\eta-\hat{\eta}|< \epsilon .
\end{equation}
\end{lemma}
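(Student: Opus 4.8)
The plan is to prove this by a standard compactness argument on the square $[0,1]^2$, after first dealing with the equivalence between the monotonicity hypothesis and strict convexity of $L_\psi(\eta,\cdot)$ with minimizer $\eta$. For the equivalence: since $(l,\psi)$ is a natural CPE loss, it is strictly proper, so $\eta$ is the unique minimizer of $L_\psi(\eta,\cdot)$, which immediately gives that $L^0_\psi(\eta,\cdot)$ is nonincreasing up to $\eta$ and $L^1_\psi(\eta,\cdot)$ is nondecreasing after $\eta$; strict monotonicity is exactly the statement that these are strictly decreasing / strictly increasing, which for a function on an interval with a unique interior-or-boundary minimum is equivalent to strict convexity once one knows $L_\psi(\eta,\cdot)$ is convex — and convexity of $L_\psi(\eta,\cdot)=\eta\, l_\psi(1,\cdot)+(1-\eta)l_\psi(-1,\cdot)$ can be taken for granted here, or one simply reads the hypothesis as "the two branches are strictly monotonic" and notes the parenthetical remark is the informal justification. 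I would state this equivalence briefly and move on, since the substance of the lemma is the implication \eqref{ineq}.

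For the main implication, I would argue by contraposition / compactness. Suppose, for contradiction, that for some fixed $\epsilon>0$ no such $\delta$ exists. Then for every $k\in\mathbb{N}$ there are points $\eta_k,\hat\eta_k\in[0,1]$ with $|\Delta L_\psi(\eta_k,\hat\eta_k)|<1/k$ but $|\eta_k-\hat\eta_k|\ge\epsilon$. By compactness of $[0,1]^2$, pass to a subsequence so that $\eta_k\to\eta_\infty$ and $\hat\eta_k\to\hat\eta_\infty$; then $|\eta_\infty-\hat\eta_\infty|\ge\epsilon>0$, so in particular $\eta_\infty\ne\hat\eta_\infty$. The goal is then to conclude $\Delta L_\psi(\eta_\infty,\hat\eta_\infty)=0$, which contradicts strict properness (strict properness says $\hat\eta_\infty\ne\eta_\infty\implies L_\psi(\eta_\infty,\hat\eta_\infty)>L^*_\psi(\eta_\infty)$, i.e. $\Delta L_\psi(\eta_\infty,\hat\eta_\infty)>0$). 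To get there I need joint continuity of $(\eta,v)\mapsto\Delta L_\psi(\eta,v)$ on $[0,1]^2$: the map $(\eta,v)\mapsto L_\psi(\eta,v)=\eta\, l_\psi(1,v)+(1-\eta)l_\psi(-1,v)$ is affine in $\eta$ and, by the strict monotonicity hypothesis, each partial loss $l_\psi(y,\cdot)$ is monotone on each of the two subintervals hence continuous except possibly at countably many jump points — here is where some care is needed. A monotone function can have jump discontinuities, so $L_\psi(\eta,\cdot)$ need not be continuous in general.

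The main obstacle is therefore exactly this continuity issue, and I expect to resolve it as follows. Rather than demanding continuity everywhere, I would run the compactness argument but handle $\Delta L_\psi$ more carefully near the limit point. Fix $\eta_\infty$ and $\hat\eta_\infty$ with $\hat\eta_\infty<\eta_\infty$ (the other case is symmetric). Since $L^0_\psi(\eta_\infty,\cdot)$ is strictly monotonic (decreasing) on $[0,\eta_\infty]$, we have $\Delta L_\psi(\eta_\infty,\hat\eta_\infty)=:2c>0$. I want to show $\liminf_k \Delta L_\psi(\eta_k,\hat\eta_k)\ge c$, contradicting $\Delta L_\psi(\eta_k,\hat\eta_k)<1/k\to 0$. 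For large $k$, $\eta_k$ is close to $\eta_\infty$ and $\hat\eta_k$ is close to $\hat\eta_\infty$, and in particular $\hat\eta_k$ lies in a small interval $[\hat\eta_\infty-\rho,\hat\eta_\infty+\rho]$ bounded away from $\eta_k$. Writing $\Delta L_\psi(\eta_k,\hat\eta_k)=L_\psi(\eta_k,\hat\eta_k)-L_\psi(\eta_k,\eta_k)$ and expanding both terms via the affine-in-$\eta$ formula, one controls the difference using: (i) the values of the two partial losses $l_\psi(\pm1,\cdot)$, which are bounded on the relevant compact interval (being monotone there), and (ii) the strict gap coming from strict monotonicity of the branch of $L_\psi(\eta_\infty,\cdot)$, transferred to nearby $\eta_k$ by the affine dependence on $\eta$. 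Concretely, $|L_\psi(\eta_k,v)-L_\psi(\eta_\infty,v)|\le|\eta_k-\eta_\infty|\cdot\big(|l_\psi(1,v)|+|l_\psi(-1,v)|\big)\le C|\eta_k-\eta_\infty|$ uniformly for $v$ in the compact interval, so $\Delta L_\psi(\eta_k,\cdot)$ and $\Delta L_\psi(\eta_\infty,\cdot)$ differ by at most $2C|\eta_k-\eta_\infty|\to0$; combined with $\hat\eta_k\to\hat\eta_\infty$ and the fact that strict monotonicity of $L^0_\psi(\eta_\infty,\cdot)$ gives $\Delta L_\psi(\eta_\infty,\hat\eta_k)\ge \Delta L_\psi(\eta_\infty,\hat\eta_\infty)/2 = c$ for $\hat\eta_k$ close enough to $\hat\eta_\infty$ from the appropriate side (using monotonicity, not continuity, to get a one-sided lower bound — and handling the other side by a separate small-$\rho$ estimate), we obtain $\Delta L_\psi(\eta_k,\hat\eta_k)\ge c - 2C|\eta_k-\eta_\infty| - (\text{small }\rho\text{ term}) \ge c/2>0$ for all large $k$, the desired contradiction. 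I would present this cleanly by first noting that it suffices to prove the statement with $|\eta-\hat\eta|\ge\epsilon$ replaced by the two one-sided versions, then on each side the key monotone inequality $L_\psi(\eta,q_1)\ge L_\psi(\eta,q_2)$ for $q_1\le q_2\le\eta$ does the work in place of continuity.
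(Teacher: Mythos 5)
Your proposal is correct in substance, but it follows a genuinely different route from the paper. The paper's proof is a two-line local argument: strict monotonicity of the two branches $L^{0}_{\psi}(\eta,\cdot)$ and $L^{1}_{\psi}(\eta,\cdot)$ implies their inverses exist and are continuous, and continuity of the inverse at the point $L_{\psi}(\eta,\eta)$ is read off as exactly the $\epsilon$--$\delta$ implication \eqref{ineq}. Note, however, that this produces a $\delta$ that a priori depends on $\eta$ as well as $\epsilon$, while the lemma asserts a single $\delta(\epsilon)$ valid for all $\eta,\hat{\eta}\in[0,1]$; the paper leaves this uniformity implicit. Your compactness/contradiction argument on $[0,1]^2$ attacks precisely that uniform statement, and it works: from the hypothesis applied at $\eta=0$ and $\eta=1$ the partial losses $l_{\psi}(\pm 1,\cdot)$ are monotone and hence bounded on $[0,1]$, which gives the uniform Lipschitz-in-$\eta$ bound $|\Delta L_{\psi}(\eta_k,v)-\Delta L_{\psi}(\eta_\infty,v)|\leq 2C|\eta_k-\eta_\infty|$, and one-sided monotonicity of $L^{0}_{\psi}(\eta_\infty,\cdot)$ replaces the missing continuity in $v$. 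The only imprecision is your final estimate ``$\geq c-2C|\eta_k-\eta_\infty|-(\text{small }\rho\text{ term})\geq c/2$'': without continuity you cannot relate $\Delta L_{\psi}(\eta_\infty,\hat{\eta}_\infty+\rho)$ to $c$, but you do not need to — taking $\rho=\epsilon/2$ and $c':=\Delta L_{\psi}(\eta_\infty,\hat{\eta}_\infty+\rho)>0$ (positive by strict monotonicity, since $\hat{\eta}_\infty+\rho<\eta_\infty$), you get $\Delta L_{\psi}(\eta_k,\hat{\eta}_k)\geq c'-2C|\eta_k-\eta_\infty|>0$ for large $k$, the desired contradiction. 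In short: the paper's inverse-function argument is shorter and needs no boundedness discussion, but is pointwise in $\eta$; your argument costs more bookkeeping but delivers the uniform $\delta(\epsilon)$ claimed in the statement directly.
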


\begin{proof}

With the assumptions on $L^{0}_{\psi}(\eta,\cdot)$ and $L^{1}_{\psi}(\eta,\cdot)$ we know that ${L^{0}_{\psi}}^{-1}(\eta,\cdot)$ and ${L^{1}_{\psi}}^{-1}(\eta,\cdot)$ exist and are continuous \cite{inverse}. By definition that means that for every $l,\hat{l} \in \Ima L^0_{\psi}(\eta,\cdot)$ and for all $\epsilon>0$ there exists a $\delta>0$ such that
\begin{equation} \label{prooflemma}
|\hat{l}-l |< \delta \Rightarrow  |{L^{0}_{\psi}}^{-1}(\eta,\hat{l}) -{L^{0}_{\psi}}^{-1}(\eta,l)| < \epsilon
\end{equation}
and similar for $L^1_{\psi}(\eta,\cdot)$. W.l.o.g assume now that $\hat{\eta} < \eta$ so that $\hat{\eta} \in [0,\eta]$. Plugging $l=L^0_{\psi}(\eta,\eta)$ and $\hat{l}=L^0_{\psi}(\eta,\hat{\eta})$ into \eqref{prooflemma} we get the following relation.
\begin{align*}
&  |\Delta L_{\psi}(\eta,\hat{\eta})|=|L^0_{\psi}(\eta,\hat{\eta})-L^0_{\psi}(\eta,\eta)| < \delta \\
 \Rightarrow &  |\hat{\eta}-\eta|= |{L^{0}_{\psi}}^{-1}(\eta,\hat{l}) -{L^{0}_{\psi}}^{-1}(\eta,l)| < \epsilon
\end{align*}
\end{proof}

The map $L^{0}_{\psi}(\eta,\cdot)$ captures the behavior of the loss when $\eta$ is the true class probability and we predict a class probability less than $\eta$. Similarly $L^{1}_{\psi}(\eta,\cdot)$ captures the behavior when we predict a class probability bigger than $\eta$, see also Figure \ref{L_loss}. In Corollary \ref{modulus}, further below, we draw a connection between $\delta(\epsilon)$ and the modulus of continuity of the inverse functions of $L^1_{\psi}(\eta,\cdot)$ and $L^0_{\psi}(\eta,\cdot)$. The function $\delta(\epsilon)$ plays an important role in the convergence rate of the estimator $\hat{\eta}(x)$ as described in the next theorem. 

\begin{figure}
\subfloat[The map $L_{\psi}(\eta,\cdot)$ for $\eta=0.2$ and $l$ \newline
being the squared loss.]{\includegraphics[width=0.5\linewidth]{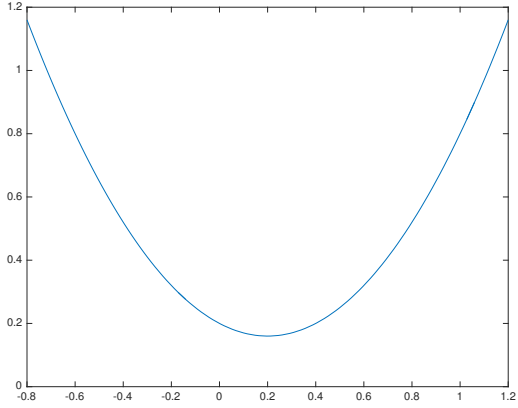}} 
\subfloat[The map $L_{\psi}(\eta,\cdot)$ for $\eta=0.2$ and $l$ being the logistic loss.]{\includegraphics[width=0.5\linewidth]{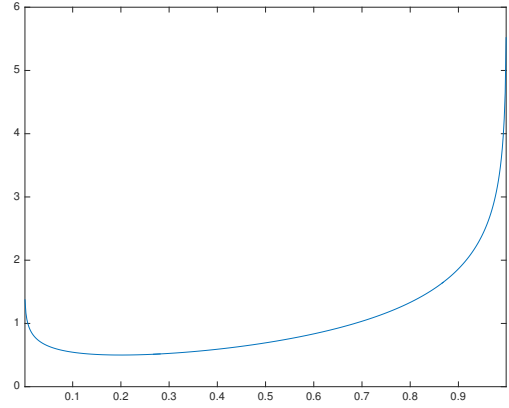}} 
\caption{The map $L_{\psi}(\eta,\cdot)$ for the squared and the logistic loss. The two maps $L^{0}_{\psi}(\eta,\cdot)$ and $L^{1}_{\psi}(\eta,\cdot)$ split it into the parts left and right of $\eta$.}
\label{L_loss}
\end{figure}

\begin{theorem} \label{probbound}
Let $(l,\psi)$ be a natural CPE loss and assume Assumption A holds. Furthermore let $B^{\mathcal{F}}(n,\gamma)$ be an excess risk bound for $f_n$ and assume that $L_{\psi}(\eta,\cdot)$ is strictly convex with $\eta$ as its minimizer. Then there exists a mapping \\ $\delta(\epsilon):[0,1] \to \mathbb{R}$ such that for $\hat{\eta}_n(x):=\psi^{-1}(f_n(x))$ we have with probability of at least $1-\gamma$ that
\begin{equation} \label{rateofconv}
P(|\eta(X)-\hat{\eta}_n(X)| > \epsilon) \leq \frac{B^{\mathcal{F}}(n,\gamma)}{\delta(\epsilon)}.
\end{equation}
\end{theorem}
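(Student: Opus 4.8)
The plan is to chain three ingredients: the pointwise comparison of Lemma~\ref{Probineq}, Markov's inequality, and the defining property of the excess risk bound $B^{\mathcal F}$. First I would observe that the hypothesis of the theorem --- that $L_\psi(\eta,\cdot)$ is strictly convex with minimizer $\eta$ for every $\eta\in[0,1]$ --- is precisely the condition under which Lemma~\ref{Probineq} applies. It therefore produces a function $\delta:[0,1]\to\mathbb{R}$ with $\delta(\epsilon)>0$ such that $|\Delta L_\psi(\eta,\hat\eta)|<\delta(\epsilon)$ implies $|\eta-\hat\eta|<\epsilon$ for all $\eta,\hat\eta\in[0,1]$. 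Since the conditional excess risk $\Delta L_\psi$ is nonnegative, the contrapositive reads: whenever $|\eta(x)-\hat\eta_n(x)|>\epsilon$ we must have $\Delta L_\psi(\eta(x),\hat\eta_n(x))\ge\delta(\epsilon)$. Taking probabilities over $X$, with the sample that defines $f_n$ held fixed, this yields $P(|\eta(X)-\hat\eta_n(X)|>\epsilon)\le P(\Delta L_\psi(\eta(X),\hat\eta_n(X))\ge\delta(\epsilon))$, where $\hat\eta_n=\psi^{-1}\circ f_n$ is well defined by Definition~\ref{extendinv} because $(l,\psi)$ is a natural CPE loss.

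Second, Markov's inequality applied to the nonnegative variable $\Delta L_\psi(\eta(X),\hat\eta_n(X))$ turns the right-hand side into $\mathbb{E}_X[\Delta L_\psi(\eta(X),\hat\eta_n(X))]/\delta(\epsilon)$, so it remains to bound this expectation by $B^{\mathcal F}(n,\gamma)$. For this I would first record that for a natural CPE loss $L^*_\psi(\eta)=L^*(\eta)$, because $\psi(\eta)\in v^*(\eta)$ by Lemma~\ref{mini1}; hence $\Delta L_\psi(\eta,q)=L(\eta,\psi(q))-L^*(\eta)$. Plugging $q=\hat\eta_n(x)=\psi^{-1}(f_n(x))$ and using that $\psi(\psi^{-1}(f_n(x)))$ and $f_n(x)$ both lie in the set $v^*(\psi^{-1}(f_n(x)))$, one obtains $\Delta L_\psi(\eta(x),\hat\eta_n(x))\le\Delta L(\eta(x),f_n(x))$, with equality whenever $v^*$ is single valued, which is the case for all the losses treated in this paper. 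Taking expectations over $X$, and invoking Assumption~A through Theorem~\ref{First} so that $f_0$ is pointwise optimal and the two expressions in the definition of $B^{\mathcal F}$ coincide, the definition of the excess risk bound gives $\mathbb{E}_X[\Delta L(\eta(X),f_n(X))]=\mathbb{E}_{X,Y}[l(Y,f_n(X))-l(Y,f_0)]\le B^{\mathcal F}(n,\gamma)$ with probability at least $1-\gamma$ over the sample. Combining the three estimates gives exactly \eqref{rateofconv}.

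The only non-routine step is the transfer inequality $\mathbb{E}_X[\Delta L_\psi(\eta(X),\hat\eta_n(X))]\le\mathbb{E}_X[\Delta L(\eta(X),f_n(X))]$: it reconciles the excess risk of the composite loss $l_\psi$ at $\hat\eta_n=\psi^{-1}\circ f_n$ with the excess risk of $l$ at $f_n$, and is delicate precisely because $\psi\circ\psi^{-1}$ need not be the identity on $\mathcal V$ when $v^*$ is set valued. I would settle it by noting that $v$ and $\psi(\psi^{-1}(v))$ always belong to the common set $v^*(\psi^{-1}(v))$, so they are interchangeable inside $L(\psi^{-1}(v),\cdot)$, and by checking that the canonical link value is a minimizer of $L(\eta,\cdot)$ over that set simultaneously for every $\eta$ --- which one can read off from the monotonicity of $\psi$ that the strict convexity hypothesis forces. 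Everything else --- the contrapositive of Lemma~\ref{Probineq}, Markov, and unwinding the definition of $B^{\mathcal F}$ --- is bookkeeping, the only care being to keep the sample randomness (the ``probability at least $1-\gamma$'') separate from the feature randomness carried by the inner $P$ and $\mathbb{E}_X$.
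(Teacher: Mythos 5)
Your proposal is correct and follows essentially the same chain as the paper's proof: the contrapositive of Lemma~\ref{Probineq}, Markov's inequality applied to $\Delta L_{\psi}(\eta(X),\hat{\eta}_n(X))$, and the definition of the excess risk bound (with Assumption~A guaranteeing $f_0$ is pointwise optimal). The only difference is the middle step, where the paper simply asserts $\Delta L_{\psi}(\eta(X),\hat{\eta}_n(X))=\Delta L(\eta(X),f_n(X))$ while you replace it by the inequality $\Delta L_{\psi}\leq\Delta L$, which is the more careful statement when $v^*$ is set-valued (e.g.\ the squared hinge loss at $\eta_v\in\{0,1\}$) and suffices for the bound.
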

\begin{proof}
Using Lemma \ref{Probineq} for the first inequality, Markov's Inequality for the second and the excess risk bound for the third inequality it follows that
\begin{align*}
&P(|\eta(X)-\hat{\eta}_n(X) |> \epsilon ) \leq P(\Delta L_{\psi}(\eta(X),\hat{\eta}_n(X)) > \delta) \\
 =& P(\Delta L(\eta(X),f_n(X)) > \delta) \\
\leq & \frac{\mathbb{E}[ \Delta L(\eta(X),f_n(X)) ]}{\delta(\epsilon) }\leq \frac{B^{\mathcal{F}}(n,\gamma)}{\delta(\epsilon)} .
\end{align*} 
\end{proof}
This theorem gives us directly the earlier claimed asymptotic convergence result.
\begin{corollary}
Under the assumptions of Theorem \ref{probbound} we have that $\hat{\eta}_n(x)=\psi^{-1}(f_n(x))$ converges in probability and $L_1$-norm to $\eta(x)$ with probability 1.
\end{corollary}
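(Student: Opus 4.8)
The plan is to obtain both modes of convergence by reading off the finite-sample estimate \eqref{rateofconv} of Theorem \ref{probbound} and letting $n\to\infty$. The ingredients are: $\delta(\epsilon)>0$ is a fixed positive quantity not depending on $n$ (Lemma \ref{Probineq} applies, since under the hypotheses of Theorem \ref{probbound} the map $L_{\psi}(\eta,\cdot)$ is strictly convex with minimizer $\eta$); $B^{\mathcal{F}}(n,\gamma)\to 0$ as $n\to\infty$ for each fixed $\gamma>0$ by definition of an excess risk bound; and $|\hat\eta_n(X)-\eta(X)|\le 1$ pointwise, because $\psi^{-1}$ takes values in $[0,1]$ and so does $\eta$.

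First I would handle convergence in probability: fix $\epsilon>0$ and $\gamma>0$; by \eqref{rateofconv}, for every $n$ we have, with probability at least $1-\gamma$ over the sample, $P(|\eta(X)-\hat\eta_n(X)|>\epsilon)\le B^{\mathcal{F}}(n,\gamma)/\delta(\epsilon)$, and the right-hand side vanishes as $n\to\infty$. For the $L_1$-statement I would split $\E_X|\eta(X)-\hat\eta_n(X)|\le \epsilon + P(|\eta(X)-\hat\eta_n(X)|>\epsilon)$ using the pointwise bound by $1$, and combine this with the same tail estimate to get $\E_X|\eta(X)-\hat\eta_n(X)|\le \epsilon + B^{\mathcal{F}}(n,\gamma)/\delta(\epsilon)$ with probability at least $1-\gamma$; since $\epsilon$ is arbitrary, this is eventually below any prescribed level. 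Thus, for each fixed $\gamma$, both quantities tend to $0$ as $n\to\infty$ with probability at least $1-\gamma$.

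The delicate point is the phrase ``with probability one'', since Theorem \ref{probbound} only controls a single $n$ at confidence $1-\gamma$ and a naive union bound over $n$ diverges. To upgrade, I would apply Theorem \ref{probbound} along a summable sequence of confidence parameters $\gamma_n$ (e.g.\ $\gamma_n=n^{-2}$): by the Borel--Cantelli lemma, almost surely the inequality $P(|\eta(X)-\hat\eta_n(X)|>\epsilon)\le B^{\mathcal{F}}(n,\gamma_n)/\delta(\epsilon)$ holds for all but finitely many $n$, and intersecting over a countable sequence $\epsilon\downarrow 0$ then gives convergence in probability, and by the same split also in $L_1$, on an event of full measure. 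This is where the only real obstacle lies: the bare definition of an excess risk bound guarantees $B^{\mathcal{F}}(n,\gamma)\to 0$ only for \emph{fixed} $\gamma$, so this route additionally needs $B^{\mathcal{F}}(n,\gamma_n)\to 0$ along the chosen sequence. This is automatic for the usual $\big(\comp(\mathcal{F})/n\big)^{\beta}$-type bounds, whose dependence on $\gamma$ is merely logarithmic, and one should simply record it as a mild extra hypothesis; if one is content with the weaker reading ``for every $\gamma>0$, with probability at least $1-\gamma$'', the Borel--Cantelli step is unnecessary and the second paragraph already concludes.
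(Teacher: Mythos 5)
Your argument is essentially the one the paper intends: the corollary is stated without its own proof as a direct consequence of Theorem \ref{probbound}, namely fixing $\epsilon$, noting $\delta(\epsilon)>0$ is independent of $n$, letting $B^{\mathcal{F}}(n,\gamma)\to 0$, and using boundedness of $|\hat\eta_n-\eta|\le 1$ to pass from the tail bound to the $L_1$ statement --- which is exactly your first two paragraphs. Where you go beyond the paper is the third paragraph: you correctly observe that the literal phrase ``with probability 1'' is not delivered by a single application of the theorem at fixed $\gamma$, and that an almost-sure upgrade via Borel--Cantelli over a summable sequence $\gamma_n$ requires the additional (mild, but not stated) hypothesis $B^{\mathcal{F}}(n,\gamma_n)\to 0$, satisfied by the usual bounds whose dependence on $\gamma$ is logarithmic. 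The paper glosses over this point, so your version is, if anything, the more careful reading; nothing in your proposal is a gap relative to what the paper actually establishes.
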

We do not have to restrict ourselves to asymptotic results though. Theorem \ref{probbound} can also be used to derive rate of convergences as we will see in Subsection \ref{rate}. But before that we briefly want to address the case of misspecification, i.e. the case when Assumption A does not hold.
\subsection{Misspecification} \label{Misspec}
The case of misspecification can be dealt with once we assume that $L^*_{\psi}$ has a gradient. If this holds then \cite{reid1} show the identity
\begin{equation} \label{Bregman}
\Delta L_{\psi}(\eta, \hat{\eta})=D_{-L^*_{\psi}}(\eta, \hat{\eta})
\end{equation}
where $D_{-L^*_{\psi}}(\eta, \hat{\eta})$ is the with $-L^*_{\psi}$ associated Bregman divergence between $\eta$ and $\hat{\eta}$.
Excess risk bounds on $\Delta L_{\psi}(\eta,\hat{\eta})$ translate then into bounds on the Bregman divergence between $\eta$ and $\hat{\eta}$ and asymptotically we approach the best class probability estimate in terms of this divergence. 

\subsection{Rate of Convergence} \label{rate}

For the rate of convergence it is crucial to investigate the function $\delta(\epsilon)$ from Inequality \eqref{rateofconv}. One way to analyze this is to study the modulus of continuity of the inverse functions of $L^{0}_{\psi}(\eta,\cdot)$ and $L^{1}_{\psi}(\eta,\cdot)$:
\begin{definition}

Let $\omega:[0,\infty] \to [0,\infty]$ be a monotonically increasing function. Let $I \subset \mathbb{R}$ be an interval. A function $g: I \to \mathbb{R}$ admits $\omega$ as a modulus of continuity at $x \in I$ if and only if
$$
|g(x)-g(y)| \leq \omega(|x-y|)
$$
for all $y \in I$.
\end{definition}

For example H{\"o}lder and Lipschitz continuity are particular moduli of continuity. This notion allows us a to draw the following connection between $\epsilon$ and $\delta(\epsilon)$.
\begin{corollary} \label{modulus}
Let $(l,\psi)$ be a natural CPE loss and let $\omega:[0, \infty] \to [0,\infty]$ be a monotonically increasing function. Assume that for all $\eta \in [0,1]$ the mappings ${L^0_{\psi}}^{-1}(\eta,\cdot)$ and  ${L^1_{\psi}}^{-1}(\eta,\cdot)$ admit $\omega$ as a modulus of continuity at $\eta$. Then $\delta(\epsilon):=\omega^{-1}(\epsilon)$ is a mapping such that Implication \eqref{ineq} holds.
\end{corollary}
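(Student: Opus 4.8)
The plan is to chain the estimate already isolated inside the proof of Lemma~\ref{Probineq} with the postulated modulus of continuity and then ``invert'' $\omega$. Fix $\epsilon>0$ and $\eta,\hat\eta\in[0,1]$, and suppose $|\Delta L_{\psi}(\eta,\hat\eta)|<\delta(\epsilon):=\omega^{-1}(\epsilon)$, where I read $\omega^{-1}(\epsilon):=\sup\{t\ge0:\omega(t)<\epsilon\}$; this coincides with the ordinary inverse when $\omega$ is continuous and strictly increasing (the case of the H{\"o}lder and Lipschitz moduli relevant in Section~\ref{rate}), and under the usual normalization $\omega(0)=0$ it is strictly positive, so the implication is non-vacuous as in Lemma~\ref{Probineq}. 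Without loss of generality assume $\hat\eta\le\eta$, so that $\hat\eta\in[0,\eta]$ and we may work with $L^{0}_{\psi}(\eta,\cdot)$; the case $\hat\eta\ge\eta$ is handled identically with $L^{1}_{\psi}(\eta,\cdot)$ in its place.

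First I would record that, since $(l,\psi)$ is a natural CPE loss and hence strictly proper, $\eta$ is the minimizer of $L_{\psi}(\eta,\cdot)$, so $L^{0}_{\psi}(\eta,\eta)=L^{*}_{\psi}(\eta)$ and therefore
\[
\Delta L_{\psi}(\eta,\hat\eta)=L_{\psi}(\eta,\hat\eta)-L^{*}_{\psi}(\eta)=L^{0}_{\psi}(\eta,\hat\eta)-L^{0}_{\psi}(\eta,\eta).
\]
As in the proof of Lemma~\ref{Probineq}, ${L^{0}_{\psi}}^{-1}(\eta,\cdot)$ is a well-defined map on $\Ima L^{0}_{\psi}(\eta,\cdot)$ with ${L^{0}_{\psi}}^{-1}(\eta,L^{0}_{\psi}(\eta,\eta))=\eta$ and ${L^{0}_{\psi}}^{-1}(\eta,L^{0}_{\psi}(\eta,\hat\eta))=\hat\eta$. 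Applying to the two points $L^{0}_{\psi}(\eta,\eta),L^{0}_{\psi}(\eta,\hat\eta)\in\Ima L^{0}_{\psi}(\eta,\cdot)$ the assumed modulus of continuity of ${L^{0}_{\psi}}^{-1}(\eta,\cdot)$ at the argument whose image is $\eta$ --- that is, at $L^{0}_{\psi}(\eta,\eta)=L^{*}_{\psi}(\eta)$, which is what the statement abbreviates by ``at $\eta$'' --- gives
\[
|\eta-\hat\eta|\le\omega\bigl(|L^{0}_{\psi}(\eta,\eta)-L^{0}_{\psi}(\eta,\hat\eta)|\bigr)=\omega\bigl(|\Delta L_{\psi}(\eta,\hat\eta)|\bigr).
\]

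To conclude I would invoke monotonicity of $\omega$: from $|\Delta L_{\psi}(\eta,\hat\eta)|<\omega^{-1}(\epsilon)=\sup\{t\ge0:\omega(t)<\epsilon\}$ there is a $t$ with $|\Delta L_{\psi}(\eta,\hat\eta)|<t$ and $\omega(t)<\epsilon$, hence $\omega(|\Delta L_{\psi}(\eta,\hat\eta)|)\le\omega(t)<\epsilon$; combined with the previous display this yields $|\eta-\hat\eta|<\epsilon$, which is precisely Implication~\eqref{ineq}. Every step is a direct substitution, so the only points demanding care --- the mildest of obstacles --- are fixing the convention for $\omega^{-1}$ so that the final inequality stays strict, and remembering to run the symmetric argument through $L^{1}_{\psi}(\eta,\cdot)$ when $\hat\eta\ge\eta$.
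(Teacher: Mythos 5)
Your proof is correct and follows essentially the same route as the paper: apply the assumed modulus of continuity of ${L^0_{\psi}}^{-1}(\eta,\cdot)$ (resp.\ ${L^1_{\psi}}^{-1}(\eta,\cdot)$) to the pair $L^0_{\psi}(\eta,\eta)$, $L^0_{\psi}(\eta,\hat\eta)$ to obtain $|\eta-\hat\eta|\leq\omega(\Delta L_{\psi}(\eta,\hat\eta))$, and then invert $\omega$ via its monotonicity. Your additional care with the generalized inverse $\omega^{-1}$ and with keeping the inequalities strict only tightens a point the paper glosses over by writing $\omega(\omega^{-1}(\epsilon))=\epsilon$.
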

\begin{proof}
W.l.o.g. assume that $\hat{\eta} \in [0,\eta]$. Let $\hat{l}=L^0_\psi(\eta,\hat{\eta})$ and $l=L^0_\psi(\eta,\eta)$. By using that ${L^0_\psi}^{-1}(\eta,\cdot)$ admits $\omega$ as a modulus of continuity we have
$$
|{L^0_\psi}^{-1}(\eta,l)-{L^0_\psi}^{-1}(\eta,\hat{l})| \leq \omega(|l-\hat{l} |).
$$
Plugging in the definition of $\hat{l}$ and $l$ this means that
$$
|\hat{\eta}-\eta| \leq \omega(\Delta L_\psi(\eta,\hat{\eta})).
$$
Using the monotonicity of $\omega$ it follows that if $\Delta L_{\psi}(\eta,\hat{\eta}) \leq \delta(\epsilon)= \omega^{-1}(\epsilon)$, then 
$$|\eta - \hat{\eta}| \leq \omega(\Delta L_\psi(\eta,\hat{\eta})) \leq  \omega(\omega^{-1}(\epsilon))= \epsilon.$$ 
This is exactly the Implication \eqref{ineq}. 
\end{proof}
Note that it follows from the proof that finding a modulus of continuity $\omega$ for ${L^0_\psi}^{-1}(\eta,\cdot)$ and ${L^1_\psi}^{-1}(\eta,\cdot)$ can be done by showing the bound $|\hat{\eta}-\eta| \leq \omega(\Delta L_\psi(\eta,\hat{\eta}))$. We will use that in the following examples, where we analyze $\delta(\epsilon)$ for the squared (hinge) loss and the logistic loss. We show that those loss functions lead to a modulus of continuity given by the square root times a constant. \cite{Agarwal2} calls loss functions that admit this modulus of continuity \emph{strongly-proper} loss functions. The following analysis can thus be found there in more detail and for a few more examples. We will use for simplicity versions of the losses that do not need a link function, and are already CPE losses.

\paragraph{Example: Squared Loss and Squared Hinge Loss} Let $l(y,\hat{\eta})$ be given by the partial loss functions $l(1,\hat{\eta})=(1-\hat{\eta})^2$ and $l(-1,\hat{\eta})=\hat{\eta}^2$. We can derive that $\Delta L(\eta,\hat{\eta})=(\eta-\hat{\eta})^2$. With this we can directly bound $
|\hat{\eta} -\eta| \leq \sqrt{\Delta L(\eta,\hat{\eta})}
$
and thus choose $\delta(\epsilon)$ as the inverse of the square-root function, so that $\delta(\epsilon)=\epsilon^2$. The analysis for the squared hinge loss is the same as this version of the squared loss is already a CPE loss.

\paragraph{Example: Logistic Loss} 
Let $l(y,\hat{\eta})$ be given by the partial loss functions $l(1,\hat{\eta})=-\ln(\hat{\eta})$ and $l(-1,\hat{\eta})=-\ln(1-\hat{\eta})$. One can derive that $\Delta L(\eta,\hat{\eta})=-\eta \ln(\frac{\hat{\eta}}{\eta})-(1-\eta)\ln(\frac{1-\hat{\eta}}{1-\eta})$. In the appendix we show the bound $
|\eta-\hat{\eta}| \leq  \sqrt{\frac{1}{2}\Delta L(\eta,\hat{\eta}})
$, as well as that $\frac{1}{2}$ is the optimal constant, so that we can choose $\delta(\epsilon)=2 \epsilon^2$.

\subsection{Squared Loss vs Squared Hinge Loss} \label{svssh}
In this section we will subscript previously defined entities with $S$ and $SH$ for the squared and square hinge loss respectively. When using squared loss vs the squared hinge loss for class probability estimation there is one big difference in the inverse of the link function, namely its domain. The inverse link function is a map $\psi^{-1}_{S}: \mathcal{V} \to [0,1]$. If we use the square loss we implicitly chose $\mathcal{V}=[-1,1]$ since this is the range of $\psi_S$. The range of $\psi_{SH}$ on the other hand is $\mathcal{V}=\mathbb{R}$. That means that if we want to use the squared loss for class probability estimation we really have to parametrize our prediction functions $f: \mathcal{X} \to [-1,1]$, a simple linear model for example would usually not fit this assumption as the range of those models can be outside of $[-1,1]$. For the squared hinge loss on the other hand we can allow for functions $f: \mathcal{X} \to \mathbb{R}$.

 \cite{clipping} proposes to just truncate the inverse link for the squared loss, so using the same inverse link as for the squared hinge loss. This is fine as long as our hypothesis class is flexible enough, but leads to problems if that is not the case as the following example shows. 
 
 Assume we are given three one-dimensional data points $x_1=-1,x_2=0,x_3=3$ together with their true class probabilities $\eta(x_1)=0, \eta(x_2)=1/3, \eta(x_3)=1$. We want to learn this classification with linear models, which are two-dimensional  after including a bias term. That means that $\mathcal{F}=\{ f: \mathcal{X} \to \mathbb{R} \mid  \exists w_1,w_2 \in \mathbb{R}: f(x)=w_1x +w_2\}$. One can check that in case of the squared hinge loss function we can recover the true class probabilities with the linear function given by $(w_1,w_2)=(2,-\frac{1}{4})$. By Theorem \ref{First} we know then that an optimal solution $f_0$ is also able to recover the true class probabilities. 

The squared loss has after truncating the following problem. Although the linear function $(w_1,w_2)=(2,-\frac{1}{4})$ is part of $\psi^{-1}_{S}(\mathcal{F})$, after truncating, it will not be found back as an optimal solution $f_0$. 
One can instead check that for the given example the true risk minimizer is given by $f_0=(\frac{19}{39},-\frac{17}{39} )$. And this hypothesis does not recover the true class probabilities. This might appear as a contradiction to Theorem \ref{First}. But the problem arises because we use a different link function than the one associated to the square loss.

\section{Discussion and Conclusion} \label{discussion}
The starting point of this paper is the question if one can retrieve consistently a class probability estimate based on ERM in a consistent way. To answer this question we draw from earlier work on proper scoring rules and excess risk bounds. Lemmas \ref{mini1} and \ref{mini2}, our first results, characterize strictly proper composite loss functions in terms of their link function. Based on those lemmas, we subsequently derive fairly general necessary and sufficient conditions for retrieving the true class probability with ERM as formulated in Theorem \ref{First}. We show that to retrieve the true probabilities we essentially need that they are already part of our hypothesis class $\mathcal{F}$.

In Section \ref{main} we show that consistency arises whenever we use strictly proper (composite) loss functions, our hypothesis class is flexible enough, and we have excess risk bounds. This is the case, for example, whenever one of the complexity notions mentioned in Section \ref{main} is finite. We then discuss the relation between the finite sample size behavior of the excess risk bound and the probability estimate and examine this relation for two example loss functions. 

In Lemma \ref{Probineq} we introduce conditions under which a composite loss function $(l,\psi)$ leads to a consistent class probability estimator. In particular we have a condition on the conditional risk $L_{\psi}(\eta,\cdot)$, see also Figure \ref{L_loss}. Based on that we derive in Corollary \ref{modulus} conditions which allow us to analyze the convergence rate for different loss functions. In the corollary we don't distinguish between $L^{0}_{\psi}(\eta,\cdot)$ and $L^{1}_{\psi}(\eta,\cdot)$, which leads to the same convergence rate for predicting values left and right from $\eta$. But the modulus of continuity for those two functions can be really different, especially when using asymmetric proper scoring rules \cite{winkler}. We believe that  by analyzing $L^{0}_{\psi}(\eta,\cdot)$ and $L^{1}_{\psi}(\eta,\cdot)$ individually one can extend our work to analyze the convergence behavior of asymmetric scoring rules in more detail, meaning that one could achieve different rates for over or underestimating a certain posterior level.
\bibliography{posterior}

\begin{thebibliography}{26}
\providecommand{\natexlab}[1]{#1}
\providecommand{\url}[1]{\texttt{#1}}
\expandafter\ifx\csname urlstyle\endcsname\relax
  \providecommand{\doi}[1]{doi: #1}\else
  \providecommand{\doi}{doi: \begingroup \urlstyle{rm}\Url}\fi

\bibitem[Agarwal and Agarwal(2015)]{elicitation}
Arpit Agarwal and Shivani Agarwal.
\newblock On consistent surrogate risk minimization and property elicitation.
\newblock In \emph{Proceedings of The 28th Conference on Learning Theory},
  pages 4--22, Paris, France, 03--06 Jul 2015.

\bibitem[Agarwal(2014)]{Agarwal2}
Shivani Agarwal.
\newblock Surrogate regret bounds for bipartite ranking via strongly proper
  losses.
\newblock \emph{Journal of Machine Learning Research}, 15\penalty0
  (1):\penalty0 1653--1674, 2014.

\bibitem[Audibert(2004)]{audibert}
Jean-Yvesr Audibert.
\newblock \emph{Une approche PAC-bay\'{e}sienne de la th\'{e}orie statistique
  de l'apprentissage}.
\newblock PhD thesis, Universit\'{e} Paris 6, 2004.

\bibitem[Bartlett and Tewari(2004)]{bartlett2}
Peter~L. Bartlett and Ambuj Tewari.
\newblock Sparseness versus estimating conditional probabilities: Some
  asymptotic results.
\newblock In John Shawe-Taylor and Yoram Singer, editors, \emph{17th Annual
  Conference on Learning Theory}, pages 564--578, Banff, Canada, 2004.

\bibitem[Bartlett et~al.(2005)Bartlett, Bousquet, and Mendelson]{bartlett3}
Peter~L. Bartlett, Olivier Bousquet, and Shahar Mendelson.
\newblock Local rademacher complexities.
\newblock \emph{The Annals of Statistics}, 33\penalty0 (4):\penalty0
  1497--1537, 08 2005.

\bibitem[Bartlett et~al.(2006)Bartlett, Jordan, and McAuliffe]{bartlett1}
Peter~L Bartlett, Michael~I Jordan, and Jon~D McAuliffe.
\newblock Convexity, classification, and risk bounds.
\newblock \emph{Journal of the American Statistical Association}, 101\penalty0
  (473):\penalty0 138--156, 2006.

\bibitem[Benedek and Itai(1991)]{Benedek}
Gyora~M. Benedek and Alon Itai.
\newblock Learnability with respect to fixed distributions.
\newblock \emph{Theory of Computer Science}, 86\penalty0 (2):\penalty0
  377--389, September 1991.
\newblock ISSN 0304-3975.

\bibitem[Buja et~al.(2005)Buja, Stuetzle, and Shen]{buja}
Andreas Buja, Werner Stuetzle, and Yi~Shen.
\newblock Loss functions for binary class probability estimation and
  classification: Structure and applications.
\newblock Technical report, University Washington, 2005.

\bibitem[Duin and Tax(1998)]{duin1998classifier}
Robert~PW Duin and David~MJ Tax.
\newblock Classifier conditional posterior probabilities.
\newblock In \emph{Joint IAPR International Workshops on Statistical Techniques
  in Pattern Recognition (SPR) and Structural and Syntactic Pattern Recognition
  (SSPR)}, pages 611--619, Sydney, NSW, Australia, 1998.

\bibitem[Grandvalet and Bengio(2004)]{entropy}
Yves Grandvalet and Yoshua Bengio.
\newblock Semi-supervised learning by entropy minimization.
\newblock In \emph{Advances in Neural Information Processing Systems 17}, pages
  529--536, Vancouver, British Columbia, Canada, 2004.

\bibitem[Gr{\"{u}}nwald and Mehta(2016)]{grunwald}
Peter~D. Gr{\"{u}}nwald and Nishant~A. Mehta.
\newblock Fast rates with unbounded losses.
\newblock \emph{The Computing Research Repository}, abs/1605.00252, 2016.

\bibitem[Györfi et~al.(2002)Györfi, Kohler, Krzyzak, and Walk]{posteriorbook}
László Györfi, Michael Kohler, Adam Krzyzak, and Harro Walk.
\newblock \emph{A Distribution-Free Theory of Nonparametric Regression.}
\newblock Springer series in statistics. Springer, 2002.

\bibitem[Hoffmann(2015)]{inverse}
Heiko Hoffmann.
\newblock On the continuity of the inverses of strictly monotonic functions.
\newblock \emph{Bulletin of the Irish Mathematical Society}, 75:\penalty0
  45--57, 2015.

\bibitem[Lewis and Catlett(1994)]{AL}
David~D. Lewis and Jason Catlett.
\newblock Heterogeneous uncertainty sampling for supervised learning.
\newblock In \emph{In Proceedings of the Eleventh International Conference on
  Machine Learning}, pages 148--156, New Brunswick, NJ, USA, 1994.

\bibitem[Platt(1999)]{Platt}
John~C. Platt.
\newblock \emph{Probabilistic Outputs for Support Vector Machines and
  Comparisons to Regularized Likelihood Methods}, pages 61--74.
\newblock The MIT Press, Cambridge, MA, USA, 1999.

\bibitem[Reid and Williamson(2009)]{surrogateregretbounds}
Mark~D. Reid and Robert~C. Williamson.
\newblock Surrogate regret bounds for proper losses.
\newblock In Andrea~Pohoreckyj Danyluk, L{\'{e}}on Bottou, and Michael~L.
  Littman, editors, \emph{Proceedings of the 26th Annual International
  Conference on Machine Learning, {ICML} 2009, Montreal, Quebec, Canada, June
  14-18, 2009}, volume 382 of \emph{{ACM} International Conference Proceeding
  Series}, pages 897--904. {ACM}, 2009.
\newblock \doi{10.1145/1553374.1553489}.
\newblock URL \url{https://doi.org/10.1145/1553374.1553489}.

\bibitem[Reid and Williamson(2010)]{reid1}
Mark~D. Reid and Robert~C. Williamson.
\newblock Composite binary losses.
\newblock \emph{Journal of Machine Learning Research}, 11:\penalty0 2387--2422,
  December 2010.
\newblock ISSN 1532-4435.

\bibitem[Reid and Williamson(2011)]{reid2}
Mark~D. Reid and Robert~C. Williamson.
\newblock Information, divergence and risk for binary experiments.
\newblock \emph{Journal of Machine Learning Research}, 12:\penalty0 731--817,
  July 2011.
\newblock ISSN 1532-4435.

\bibitem[Roy and McCallum(2001)]{AL2}
Nicholas Roy and Andrew McCallum.
\newblock Toward optimal active learning through sampling estimation of error
  reduction.
\newblock In \emph{Proceedings of the Eighteenth International Conference on
  Machine Learning}, pages 441--448, Williams College, Williamstown, MA, USA,
  2001.

\bibitem[Steinwart(2007)]{steinwart}
Ingo Steinwart.
\newblock How to compare different loss functions and their risks.
\newblock \emph{Constructive Approximation}, 26(2):\penalty0 225--287, 2007.

\bibitem[Sugiyama(2010)]{clipping}
Masashi Sugiyama.
\newblock Superfast-trainable multi-class probabilistic classifier by
  least-squares posterior fitting.
\newblock \emph{{IEICE} Transactions}, 93-D\penalty0 (10):\penalty0 2690--2701,
  2010.

\bibitem[Telgarsky et~al.(2015)Telgarsky, Dud{\'{\i}}k, and
  Schapire]{convergence}
Matus Telgarsky, Miroslav Dud{\'{\i}}k, and Robert Schapire.
\newblock Convex risk minimization and conditional probability estimation.
\newblock In \emph{Proceedings of The 28th Conference on Learning Theory},
  pages 1629--1682, Paris, France, 2015.

\bibitem[Tsybakov(2004)]{tsybakov}
Alexander~B. Tsybakov.
\newblock Optimal aggregation of classifiers in statistical learning.
\newblock \emph{The Annals of Statistics}, 32\penalty0 (1):\penalty0 135--166,
  02 2004.

\bibitem[Vapnik(1998)]{Vapnik1998}
Vladimir~N. Vapnik.
\newblock \emph{Statistical Learning Theory}.
\newblock Wiley-Interscience, 1998.

\bibitem[Winkler(1994)]{winkler}
Robert~L. Winkler.
\newblock Evaluating probabilities: Asymmetric scoring rules.
\newblock \emph{Management Science}, 40\penalty0 (11):\penalty0 1395--1405,
  1994.

\bibitem[Zhang(2004)]{zhang}
Tong Zhang.
\newblock Statistical behavior and consistency of classification methods based
  on convex risk minimization.
\newblock \emph{The Annals of Statistics}, 32:\penalty0 56--134, March 2004.

\end{thebibliography}
\bibliographystyle{plainnat}

\appendix
\section{Proof of the Logistic Loss Bound}
We show the bound $|x-y| \leq \sqrt{\frac{1}{2}(x\ln(\frac{x}{y}+(1-x)\ln(\frac{1-x}{1-y}))}$ as well as that $\frac{1}{2}$ is the optimal constant.
\begin{proof}
To show that $\frac{1}{2}$ is the optimal constant we will leave it for now a variable $c$.
Equivalent to the above statement we can show that $0 \leq {c(x\ln(\frac{x}{y}+(1-x)\ln(\frac{1-x}{1-y}))}-(x-y)^2=:f_y(x)$. We show this for any fixed $y \in (0,1)$. First note that for $x \to 1$ or $x \to 0$ we observe that $f_y(x) \to +\infty$. To prove the above inequality it suffices thus to show that the only minimum of $f_y(x)$ is given by $x=y$. To do so, we analyze the minima of $f_y(x)$. The derivative is given by $\frac{\partial f_y(x)}{\partial x}=c(\ln(\frac{(y-1)x}{(x-1)y})+2(x-y)$. Finding the minima with the first order condition of equating the derivative to zero, we find that the minima are given by the equation
\begin{equation}
\frac{e^{\frac{-2}{c}x}x}{x-1}=\frac{e^{\frac{-2}{c}y}y}{y-1}.
\end{equation}
To show that $x=y$ is the only valid solution we show that $g(x):=\frac{e^{\frac{-2}{c}x}x}{x-1}$ is monotonically strictly increasing in $x$. For that we look at the first derivative of $g(x)$, which is given by
\begin{equation}
\frac{\partial g(x)}{\partial x}=\frac{e^{\frac{-2}{c}x}(c+2(x-1)x)}{c(x-1)^2}.
\end{equation}
The function $g(x)$ is monotonically strictly increasing if the derivative is always positive, which is the case if $c>-2(x-1)x$ for all $x$. As $x=\frac{1}{2}$ maximizes the right hand side we come to the inequality $c>\frac{1}{2}$. Further analysis shows that $c=\frac{1}{2}$ is still a valid choice, as it only leads to $g(x)$ having a saddle point at $x=\frac{1}{2}$, but still being monotonically strictly increasing. 
\end{proof}
\end{document}